\documentclass[fleqn, a4paper, 11pt]{article}
\usepackage{epsfig, amssymb, amsbsy, amsmath, amsthm, mathrsfs}
\usepackage{psfrag}
\setlength{\textwidth}{16.0cm}
\setlength{\oddsidemargin}{0.0cm}
\setlength{\evensidemargin}{0.0cm}

\newtheorem{Prop}{Proposition}

\newtheorem{Def}{Definition}

\newtheorem{Cor}{Corollary}

\DeclareMathOperator*{\argmin}{argmin}

\title{\Large Mathematical Interpretation between Genotype and Phenotype Spaces and
Induced Geometric Crossovers}
%\author{Yourim Yoon, Yong-Hyuk Kim, Alberto Moraglio, and Byung-Ro Moon}
\author{{\small Yourim Yoon$^\dag$,~~ Yong-Hyuk Kim$^\ddag$,~~ Alberto Moraglio$^*$, and~ Byung-Ro Moon$^\dag$}\\
{\small $\dag$ School of Computer Science \& Engineering, Seoul National University}\\
{\small Sillim-dong, Gwanak-gu, Seoul, 151-744, Korea}\\
{\small Email: {\tt \{yryoon, moon\}@soar.snu.ac.kr}}\\
{\small $\ddag$ Department of Computer Science \& Engineering, Kwangwoon University}\\
{\small 447-1, Wolgye-dong, Nowon-gu, Seoul, 139-701, Korea}\\
{\small Homepage: {\tt http://soar.snu.ac.kr/\~{ }yhdfly}}\\
{\small Email: {\tt yhdfly@kw.ac.kr}}\\
{\small $*$ Department of Computer Engineering, University of Coimbra}\\
{\small Polo II - Pinhal de Marrocos, Coimbra, 3030-290, Portugal}\\
{\small Email: {\tt moraglio@dei.uc.pt}}
}

\date{March 10, 2009}

%\mainmatter
\begin{document}
%\baselineskip=1.6eM
%\pagenumbering{arabic}
%\renewcommand{\abstractname}{{\it \bf Abstract}}
%\renewcommand{\figurename}{\small Figure}
%\renewcommand{\tablename}{\small Table}
%\renewcommand{\figurename}{Figure}
%\renewcommand{\tablename}{Table}
%\renewcommand{\appendixname}{{\it \bf Appendix}}
%\sloppy \pagestyle{empty}

%\conferenceinfo{GECCO'07,} {July 7--11, 2007, London, England.}
%\CopyrightYear{2007}

\maketitle

\begin{abstract}
In this paper, we present that genotype-phenotype mapping can be theoretically
interpreted using the concept of quotient space in mathematics.
Quotient space can be considered as mathematically-defined phenotype space in the
evolutionary
computation theory.
The quotient geometric crossover has the effect of reducing the search space
actually
searched by geometric crossover, and it introduces problem knowledge in the
search by using a distance better tailored to the specific solution
interpretation.
Quotient geometric crossovers are directly applied to the genotype space
but they have the effect of the crossovers performed on phenotype space.
We give many example applications of the quotient geometric crossover.\\
{\bf Keywords}: Geometric crossover, genotype-phenotype mapping,
quotient metric space, quotient geometric crossover.
\end{abstract}
%\keywords{Geometric crossover, genotype-phenotype mapping,
%quotient metric space, quotient geometric crossover}

\section{Introduction}

In evolutionary computation, 
genotype means solution representation, which is the structure that can be
stored in a computer and manipulated. Phenotype means solution
itself without any reference to how it is represented. Sometimes it
is possible to have a one-to-one mapping between genotypes and phenotypes,
so the distinction between genotype and phenotype becomes purely
formal. However, in many interesting cases, phenotypes cannot be
represented uniquely by genotypes. So the same phenotype is
represented by more than one genotype. In such case we say that we
have a \emph{redundant representation}. For example, to represent a
graph we need to label its nodes and then we can represent it using
its adjacency matrix. This representation is redundant because the same
graph can be represented with more than one adjacency matrix by
relabeling its nodes.

There are quite a few problems in that it is hard to represent one phenotype
by just one genotype using traditional representations.
Roughly speaking, redundant representation leads to severe loss of search power
in genetic algorithms, in particular, with respect to traditional crossovers
\cite{CM03}.
To alleviate the problems caused by redundant representation,
a number of methods such as adaptive crossover have been proposed \cite{DH98,
Las91, Muhl92, HNG02}.
Among them, a technique called {\em normalization}\footnote{The term of 
{\em normalization} is firstly appeared in \cite{KM00-2}.
However, it is based on the adaptive crossovers proposed in \cite{Las91, Muhl92}.}
is representative. It transforms the genotype of a parent to another genotype
to be consistent
with the other parent so that the genotype contexts of the parents are as
similar as possible in crossover.
There have been a number of successful studies using normalization.
An extensive survey about normalization is appeared in \cite{Choi08}.

We recognized that genotype-phenotype mapping can be theoretically
interpreted using the concept of quotient space in mathematics.
In this paper, we formally present the general relation between the notion of quotient and
genotype-phenotype mapping, and we study the relation between genotype and
phenotype spaces and geometric crossovers on them.

For analysis, we adopted the concept of geometric crossover \cite{MorP05-p}
because it is representation-independent and
well-defined once a notion of
distance in the search space is defined.
In this study, we consider only genotype and phenotype spaces that are metric spaces.
So the metric for a space is considered as the most important
characteristic of its structure.
This approach enables to deal with the problem spaces more
mathematically.

The remainder of the paper is organized as follows.
In Section~\ref{sec:preliminaries}, we preliminarily present some
necessary mathematical notions and the
geometric framework. In Section \ref{sec:qgx}, the new notion of quotient
geometric crossover is introduced in connection with genotype-phenotype mapping.
In Section \ref{sec:applications}, we study several useful examples.
In Section \ref{sec:grouping} and \ref{sec:graphs},
we show how previous work on
groupings \cite{KYMM2006} and graphs
can be recast and understood more simply in terms of
quotient geometric crossover. In such problems, quotient geometric crossover
has the effect of filtering out inherent redundancy in the solution
representation.
In Section \ref{sec:symmetric},
we consider symmetric functions and their problem spaces.
The usage of the quotient
geometric crossover
for circular permutation encodings is discussed in Section \ref{sec:permutation}.
In Section \ref{sec:sequences},
%we apply the quotient geometric crossover in a completely different way:
we show how homologous
crossover for variable-length sequences \cite{moraglio2006a-p}
can be understood as a quotient geometric crossover.
%In Section \ref{sec:glued_space}, we present a thorough study of the
%application of quotient geometric crossover to glued real-valued
%representations that remove the inherent bias of geometric crossover
%based on simple real-valued representations.
Finally we give our conclusions in Section~\ref{sec:conclusions}.

\section{Preliminaries}
\label{sec:preliminaries}

\subsection{Mathematical Notions}

In the following, we give
some known mathematical definitions required to present our idea.

Given a set $X$ and an equivalence relation $\sim$ on $X$, the
equivalence class of an element $a$ in $X$ is the subset of all elements in $X$ that
are equivalent to $a$:

$$\bar{a} = \{ x \in X : a \sim x \}.$$

The set of all equivalence classes in $X$ given an
equivalence relation $\sim$ is usually denoted by $X/\!\!\sim$ and called the quotient set
of $X$ by $\sim$. This operation can be thought (very informally) as the act of
``dividing'' the input set by the equivalence relation. 
The quotient set is considered
as the set with all the equivalent points identified as a point.

Next, {\em group} \cite{AbstractAlgebra} is introduced.
A group is an algebraic structure consisting of a set together with an
operation that combines any two of its elements to form a third element.
To qualify as a group, the set and operation must satisfy a few conditions
called group axioms, namely associativity, identity, and invertibility.
Formally it is defined as follows:

\begin{Def}[Group]
A {\em group} $(\mathcal{G}, *)$ is a set $\mathcal{G}$ closed under a binary
operation $*$, such that the following axioms are satisfied:\\
(i) Associativity: for all $a,b,c \in G$, we have
$$(a*b)*c = a*(b*c).$$
(ii) Identity:
there is an element $e$ in $G$ such that for all $x \in G$,
$$e*x=x*e=x.$$
(iii) Invertibility:
for each $a \in G$, there is an element $a^{-1}$ in $G$ such that
$$a*a^{-1}=a^{-1}*a=e.$$
\end{Def}

\noindent In this paper, we will use groups for
constructing equivalence relations with {\em good} properties.

In the following, we view the problem spaces as metric spaces.
It is a reasonable assumption because
the general solution spaces usually have metrics.
For example, binary space has the Hamming distance and
real space has Minkowski distances including the Euclidean distance.
Formally, the term \emph{metric} - or \emph{distance} - denotes any real-valued
function that conforms to the axioms of identity, symmetry, and
triangular inequality. Now, we introduce an isometry on a metric space.

\begin{Def}[Isometry]
Let $(X,d)$ be a metric space. If $f\!\!: X \rightarrow X$
satisfies the condition
$$d(f(x),f(y)) = d(x,y)$$
for all $x,y \in X$, then $f$ is called an {\em isometry} of $X$.
\end{Def}

\noindent The set of isometries on $X$ is denoted by $Iso(X)$.
$Iso(X)$ forms a group under function composition operator.
In our study, an isometry subgroup $\mathcal{G} \subseteq Iso(X)$ 
will be considered to generate an equivalence relation for quotient metric space.

\subsection{Geometric Preliminaries}

In this subsection we provide
some geometric definitions, which extend those
introduced in \cite{MorP04-p, MorP05-2-p}. The following
definitions are taken from \cite{Deza}.

In a metric space $(X,d)$, a \emph{line segment} (or closed interval)
is the set of the form $[x;y]_d = \{z \in X ~|~ d(x,z) + d(z,y) =
d(x,y)\}$, where $x,y \in X$ are called extremes of the segment.
Metric segment generalizes the familiar notions of segment in the
Euclidean space to any metric space through distance redefinition.
Notice that a metric segment does not coincide to a shortest path
connecting its extremes (\emph{geodesic}) as in an Euclidean space.
In general, there may be more than one geodesic connecting two
extremes; the metric segment is the union of all geodesics.

We assign a structure to the solution set $X$ by endowing it with a
notion of distance $d$. $M=(X, d)$ is therefore a solution
\emph{space} and $(M, f)$ is the corresponding fitness landscape,
where $f$ is the fitness function over $X$.

\subsection{Geometric Crossover}

Geometric crossover is a representation-independent search operator 
that generalizes many
pre-existing search operators for the major representations used in
evolutionary algorithms, such as binary strings \cite{MorP04-p},
real vectors \cite{MorP04-p, Yoon07}, permutations \cite{MorP05-p},
permutations with repetition \cite{Mor07},
syntactic trees \cite{MorP05-2-p}, sequences
\cite{moraglio2006a-p}, and sets \cite{Mor06}. 
It is defined in geometric terms using
the notions of line segment and ball. These notions and the
corresponding genetic operators are well-defined once a notion of
distance in the search space is defined. Defining search operators
as functions of the search space is opposite to the standard way
\cite{Jones95} in which the search space is seen as a function of
the search operators employed. This viewpoint greatly simplifies the
relationship between search operators and fitness landscape and has
allowed us to give simple {\em rules-of-thumb} to build crossover
operators that are likely to perform well.

The following definitions are \emph{representation-independent}
therefore applicable to any representation.

\begin{Def}[Image set]
The \emph{image set} $Im[OP]$ of a genetic operator $OP$ is the set
of all possible offspring produced by $OP$.
\end{Def}

\begin{Def}[Geometric crossover]
A binary operator $GX$ is a {\em geometric crossover} under the metric $d$ if
all offspring are in the segment between its parents $x$ and $y$, i.e.,
$$Im[GX(x,y)] \subseteq [x;y]_d.$$
\end{Def}

\noindent A number of general properties for geometric crossover
have been derived in \cite{MorP04-p} where it was also shown that
traditional mask-based crossovers are geometric under the Hamming distance.
Moraglio and Poli also studied various crossovers for
permutations, revealing that PMX (partially matched crossover) 
\cite{Gold85}, a well-known crossover for
permutations, is geometric under swap distance. Also, they found that
cycle crossover \cite{Oliver87}, another traditional crossover for permutations, is
geometric under swap distance and under the Hamming distance.

Theoretical results of metric spaces can naturally lead to
interesting results for geometric crossover. In particular,
Moraglio and Poli showed that
the notion of \emph{metric transformation} has great potential for
geometric crossover in \cite{MorP2006}.
A metric transformation is an operator that
constructs new metric spaces from pre-existing metric spaces: it
takes one or more metric spaces as input and outputs a new metric
space. The notion of metric transformation becomes extremely
interesting when considered together with distances firmly rooted in
the syntactic structure of the underlying solution representation
(e.g., edit distance). In these cases it gives rise to a simple and
\emph{natural interpretation in terms of syntactic transformations}.

Moraglio and Poli extended
the geometric framework introducing the notion of product crossover
associated with the Cartesian product of metric spaces in \cite{MorP2006}.
This is a very important tool that allows one to build new geometric
crossovers customized to problems with mixed representations by
combining pre-existing geometric crossovers in a straightforward
way. Using the product geometric crossover, they also showed that
traditional crossovers for symbolic vectors and blend crossovers for
integer and real vectors are geometric crossover.

\section{Quotient Geometric Crossover}
\label{sec:qgx}

\subsection{Motivation}

Geometric operators are defined as functions of the distance
associated to the search space. However, the search space does not
come with the problem itself. The problem consists only of a fitness
function to optimize, that defines what a solution is and how to
evaluate it, but it does not give any structure on the solution set.
The act of putting a structure over the solution set is a part of the
search algorithm design and it is a designer's choice.

A fitness landscape is the fitness function plus a structure over
the solution space. So, for each problem, there is one fitness
function but as many fitness landscapes as the number of possible
different structures over the solution set. In principle, the
designer can choose the structure to assign to the solution set
completely independently from the problem at hand. However, because
the search operators are defined over such a structure, doing so
would make them  decoupled from the problem at hand, hence turning
the search into something very close to random search.

To avoid such problem, one can exploit problem knowledge in the
search. It can be achieved by carefully designing the connectivity
structure of the fitness landscape. For example, one can study the
objective function of the problem and select a neighborhood
structure that couples the distance between solutions and their
fitness values. Once it is done, problem knowledge can be exploited
by search operators to perform better than random search, even if
the search operators are problem-independent (as is the case of
geometric operators). Indeed, the fitness landscape is
a knowledge interface between the problem at hand and a formal,
problem-independent search algorithm.

Under which conditions is a landscape  well-searchable by geometric
operators? As a rule of thumb, geometric
crossover works well on landscapes where the closer pairs of
solutions, the more correlated their fitness values. Of course this
is no surprise: the importance of landscape smoothness has been
advocated in many different contexts and has been confirmed in
uncountable empirical studies with many neighborhood search
meta-heuristics \cite{pardalos2002}. We operate according to the
following rules-of-thumb:

\noindent \emph{Rule-of-thumb 1}: if we have a good distance for the
problem at hand, then we have a good geometric crossover.

\noindent \emph{Rule-of-thumb 2}: a good distance for the problem at
hand is a distance that makes the landscape ``smooth.''

\subsection{Genotype-Phenotype Mapping}

%The notion of quotient geometric crossover is important because it lies
%at the heart of the relation between geometric crossover and
%genotype-phenotype mapping as we illustrate in the following.

We formally present the general relation between the notion of quotient and
genotype-phenotype mapping. 
%The concept of normalization defined by distance is
%closely related with the quotient geometric crossover.
Let $G$ and $P$ be genotype space and phenotype one, respectively.
Consider a genotype-phenotype mapping $\mathfrak{g}\!\!: G \rightarrow P$
that are not injective (i.e., redundant representation). The mapping 
$\mathfrak{g}$ induces a natural equivalence relation $\sim$ on the set of
genotypes: {\em genotypes with the same phenotype belong to the same
class}. Then the phenotype space $P$ becomes exactly a quotient space $G/\!\!\sim$
of the genotype space $G$.
%Given a distance $d_G$ on genotypes $G$, the quotient with
%the relation $\sim$ produces a distance $d_P$ on the phenotypes $P$:
%$P=G/\sim$ and $d_P(\bar{x}, \bar{y}) = \inf_{x \in \bar{x}, y \in
%\bar{y}} d_G(x,y)$.

The advantage of geometric crossover is that
we can formally define a geometric crossover under the distance once a distance
is defined. Then what if the quotient space $G/\!\!\sim$ has a distance $d_P$ induced by the
distance $d_G$ of $G$? If so, the geometric crossover under $d_P$ would be
a natural crossover since it reflects the structure of the genotype space $G$
by involving the distance $d_G$ of $G$.

By applying the formal definition of geometric crossover to the
metric spaces $(G,d_G)$ and $(P,d_P)$, we obtain the geometric
crossovers $GX_G$ and $GX_P$, respectively. $GX_G$ searches the space of
genotypes and $GX_P$ searches that of phenotypes. Searching the
space of phenotypes has a number of advantages: (i) it is smaller
than the space of genotypes, hence quicker to search (ii) the
phenotypic distance is better tailored to the underlying problem,
hence the corresponding geometric crossover works better (iii) the
space of phenotypes has different geometric characteristics from the
genotypic space. It can be used to remove unwanted bias from
geometric crossover.

However, the crossover $GX_P$ cannot be directly implemented because
it recombines phenotypes that are objects that cannot be directly
represented. 
So, we propose a notion of 
quotient geometric crossover to search
the space of phenotypes with the crossover $GX_P$ \emph{indirectly}
by manipulating the genotypes $G$.
The relationship among $G$, $P$, and their geometric crossovers
is illustrated through a diagram in Figure~\ref{fig:diagram}.
%This is possible because for the
%commutative diagram there exists an induced geometricity-preserving
%transformation $gt$ of the genotypic crossover $X_G$ that allows us
%to use the genotypic representation to implement a geometric
%crossover in the space of phenotypes $gt(X_G)$ without making
%explicit use of phenotypes. The type of the transformation $gt$
%depends on the type of the equivalence relation $\sim$ used in the
%quotient of the underlying metric space that in turns depends on
%the underlying syntax of the solution representation. It may happen
%that the induced geometricity-preserving transformation may be
%difficult to implement and/or computationally intractable. In these
%cases, it may not be feasible using an exact equivalent of the
%phenotypic geometric crossover, but an approximation may be
%preferable and still retaining most of the advantages of the exact
%equivalent.

%In the following section we consider a number of equivalence
%classes for the quotient operation and its related induced genotypic
%crossover transformation.

\begin{figure}
\psfrag{g}{\large $\mathfrak{g}$}
\psfrag{geometric}{geometric}
\psfrag{quotient}{quotient}
\psfrag{crossover}{crossover}
\centering
\centerline{\epsfig{figure=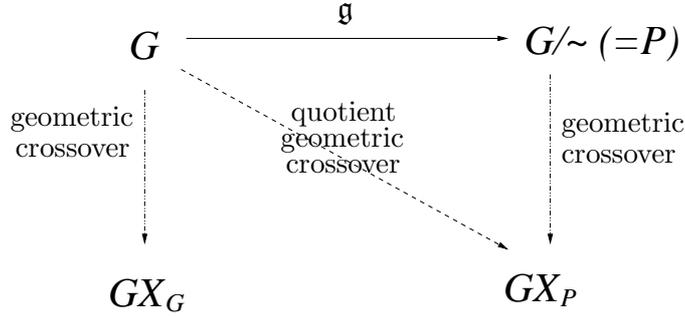,width=3.5in,angle=0}}
\caption[]{Diagram linking genotype, phenotype spaces, and geometric crossovers} 
\label{fig:diagram}
\end{figure}

In the next subsection we present more formally the concept of
quotient metric space and quotient geometric crossover in relation with
genotype-phenotype mapping.

\subsection{Quotient Metric Space}

Let $(X, d)$ be a metric space
and $(\mathcal{G},\cdot) \subseteq Iso(X)$ be a subgroup of the isometry
group, where $\cdot$ means the function composition operator.
We introduce a relation $\sim_{\mathcal{G}}$: $x$ and $y$ in $X$ are
equivalent if and only if $x=g(y)$ for some $g \in \mathcal{G}$.
Then $\sim_{\mathcal{G}}$ is an equivalence relation by the following
proposition.

\begin{Prop}
\label{prop:equivalence}
Relation $\sim_\mathcal{G}$ is an equivalence relation.
\end{Prop}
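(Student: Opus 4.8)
The plan is to verify the three defining properties of an equivalence relation — reflexivity, symmetry, and transitivity — for the relation $\sim_{\mathcal{G}}$, where $x \sim_{\mathcal{G}} y$ means $x = g(y)$ for some $g \in \mathcal{G}$. The whole argument hinges on the fact that $\mathcal{G}$ is not merely a set of isometries but a \emph{subgroup} of $Iso(X)$, so the group axioms (identity, invertibility, closure under composition) are exactly what is needed. The structure of the proof will mirror the structure of the group axioms one-to-one.

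First I would prove reflexivity. Since $\mathcal{G}$ is a group, it contains the identity element $e$, which acts as the identity map on $X$, so $e(x) = x$ for every $x \in X$. Hence $x = e(x)$ witnesses $x \sim_{\mathcal{G}} x$. Next I would prove symmetry: suppose $x \sim_{\mathcal{G}} y$, so $x = g(y)$ for some $g \in \mathcal{G}$. By invertibility, $g^{-1} \in \mathcal{G}$, and applying $g^{-1}$ to both sides gives $g^{-1}(x) = g^{-1}(g(y)) = y$, i.e. $y = g^{-1}(x)$ with $g^{-1} \in \mathcal{G}$, so $y \sim_{\mathcal{G}} x$. Finally, for transitivity, assume $x \sim_{\mathcal{G}} y$ and $y \sim_{\mathcal{G}} z$, so $x = g(y)$ and $y = h(z)$ for some $g, h \in \mathcal{G}$. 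Then $x = g(h(z)) = (g \cdot h)(z)$, and by closure $g \cdot h \in \mathcal{G}$, so $x \sim_{\mathcal{G}} z$.

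There is no real obstacle here; the result is essentially a restatement that a group action induces an equivalence relation whose classes are the orbits. The only point requiring minor care is making sure I use the group structure of $\mathcal{G}$ rather than just the fact that its elements are isometries — the isometry property of the maps plays no role in this particular proposition (it becomes relevant only later, when defining the quotient metric $d_P$). In writing it up I would be explicit that each of the three steps invokes precisely one group axiom, which makes transparent \emph{why} a subgroup, and not an arbitrary collection of isometries, is the correct hypothesis.
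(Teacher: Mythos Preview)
Your proof is correct and follows exactly the same approach as the paper: reflexivity via the identity element, symmetry via inverses, and transitivity via closure under composition. Your additional remark that only the group structure (not the isometry property) is used here is accurate and worth keeping.
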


\begin{proof}
Assume that $x$, $y$, and $z \in X$.\\
(i) Reflexivity:
(Since $\mathcal{G}$ is a group, identity map $e$ is in $\mathcal{G}$.)
Since $x = e(x)$, $x \sim_{\mathcal{G}} x$.\\
(ii) Symmetry:
Suppose that $x \sim_G y$, i.e., $x=g(y)$ for some $g \in G$.
There exists $g^{-1} \in G$ since $G$ is a group.
Then, $y = g^{-1}(x)$. So $y \sim_G x$. \\
(iii) Transitivity:
Suppose that $x \sim_G y$ and $y \sim_G z$.
$x=g(y)$ and $y=h(z)$ for some $g$ and $h \in G$.
Then $x = g(y) = g(h(z)) = (g \cdot h)(z)$. $g \cdot h$ is in $G$
since $G$ is a group. Hence, $x \sim_{\mathcal{G}} z$.
\end{proof}

For $x \in X$, equivalence class $\bar{x}$ can be written as 
$\bar{x} = \{g(x) : g \in \mathcal{G}\}$.
Now we will give a metric on
$X/\!\!\sim_G$ (usually denoted by just $X/\mathcal{G}$) induced by the original metric $d$ on $X$.

\begin{Def}[Quotient metric]
\label{def:quotient_metric}
{\em Quotient metric} $\bar{d}(\bar{x},\bar{y})$ is defined as $\min\{d(x',y') : x' \in \bar{x}, y' \in \bar{y} \}$.
\end{Def}

\noindent It is shown that $\bar{d}$ is actually a metric on $X/\mathcal{G}$ in the following
proposition.
%\cite{Burago}

\begin{Prop}
\label{prop:metric}
$(X/G,\bar{d})$ is a metric space, i.e., $\bar{d}$ is a metric in $X/G$.
\end{Prop}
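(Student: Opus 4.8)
The plan is to verify the three metric axioms — identity, symmetry, and the triangle inequality — for $\bar{d}$ on $X/\mathcal{G}$. Before doing so, I would first record a reformulation that makes every axiom easier to check: since each member of $\mathcal{G}$ is an isometry, for any fixed representatives $x \in \bar{x}$ and $y \in \bar{y}$ one has
$$\bar{d}(\bar{x},\bar{y}) = \min\{d(x,g(y)) : g \in \mathcal{G}\}.$$
Indeed, an arbitrary pair $(x',y')$ with $x' \in \bar{x}$ and $y' \in \bar{y}$ can be written $x'=a(x)$, $y'=b(y)$ for some $a,b \in \mathcal{G}$, and applying the isometry $a^{-1}$ gives $d(x',y') = d(a(x),b(y)) = d(x,(a^{-1}b)(y))$; as $a,b$ range over $\mathcal{G}$ the element $a^{-1}b$ ranges over all of $\mathcal{G}$, so the two candidate sets coincide and hence so do their minima. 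I would also note at the outset that the minimum is taken to be attained (automatic when $\mathcal{G}$ is finite; for infinite $\mathcal{G}$ one replaces $\min$ by $\inf$ and adds a routine $\varepsilon$-argument in the triangle step).

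Symmetry is immediate from the symmetry of $d$, since the defining set $\{d(x',y')\}$ is unchanged when the roles of $\bar{x}$ and $\bar{y}$ are swapped. For the identity axiom, non-negativity follows from $d \geq 0$. If $\bar{x}=\bar{y}$ then choosing $x'=y'=x$ gives $\bar{d}=0$; conversely, if $\bar{d}(\bar{x},\bar{y})=0$ then some $x' \in \bar{x}$, $y' \in \bar{y}$ satisfy $d(x',y')=0$, so $x'=y'$ because $d$ is a metric, which places a common point in $\bar{x} \cap \bar{y}$ and forces $\bar{x}=\bar{y}$ since distinct equivalence classes are disjoint.

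The main work — and the step I expect to be the real obstacle — is the triangle inequality $\bar{d}(\bar{x},\bar{z}) \leq \bar{d}(\bar{x},\bar{y}) + \bar{d}(\bar{y},\bar{z})$. The difficulty is that the minimizing representatives for the two right-hand terms generally use \emph{different} elements of the shared class $\bar{y}$, so I cannot directly chain them through the triangle inequality of $d$ in $X$. The key idea is to use the group action to realign them. Suppose the two minima are attained at $(x_1,y_1)$ with $x_1 \in \bar{x}$, $y_1 \in \bar{y}$, and at $(y_2,z_2)$ with $y_2 \in \bar{y}$, $z_2 \in \bar{z}$. Since $y_1,y_2 \in \bar{y}$, there is $g \in \mathcal{G}$ with $y_2=g(y_1)$. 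Applying the isometry $g$ to the first pair preserves its distance, $d(g(x_1),y_2)=d(g(x_1),g(y_1))=d(x_1,y_1)=\bar{d}(\bar{x},\bar{y})$, while $g(x_1) \in \bar{x}$. Now $g(x_1)$, $y_2$, $z_2$ are genuine points of $X$, so the triangle inequality there yields
$$d(g(x_1),z_2) \leq d(g(x_1),y_2) + d(y_2,z_2) = \bar{d}(\bar{x},\bar{y}) + \bar{d}(\bar{y},\bar{z}).$$
Finally, since $g(x_1) \in \bar{x}$ and $z_2 \in \bar{z}$, the left-hand side is one of the candidates in the definition of $\bar{d}(\bar{x},\bar{z})$, so $\bar{d}(\bar{x},\bar{z}) \leq d(g(x_1),z_2)$, and combining the two inequalities completes the argument. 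The whole proof rests on the fact that $\mathcal{G}$ consists of isometries: this is precisely what lets us transport one minimizing pair onto a common representative of $\bar{y}$ without changing its length.
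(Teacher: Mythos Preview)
Your proof is correct and follows essentially the same route as the paper's: the triangle-inequality step uses the same realignment-by-isometry idea (you transport the first minimizing pair via $g$ while the paper transports the second, a symmetric choice). Your treatment is in fact slightly more thorough---you verify $\bar{d}(\bar{x},\bar{y})=0 \Rightarrow \bar{x}=\bar{y}$, which the paper's identity clause omits, and your preliminary reformulation $\bar{d}(\bar{x},\bar{y})=\min_{g\in\mathcal{G}}d(x,g(y))$ is what the paper isolates as a separate subsequent proposition.
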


\begin{proof}
Assume that $x$, $y$, and $z \in X$.\\
(i) Identity:
$0 \le \bar{d}(\bar{x}, \bar{x}) \le d(x, x) = 0$.\\
(ii) Symmetry:
There exist $x_1 \in \bar{x}$ and $y_1 \in \bar{y}$ such that
$\bar{d}(\bar{x},\bar{y})=d(x_1,y_1)$.
Then,
$\bar{d}(\bar{x},\bar{y}) = d(x_1, y_1) = d(y_1,x_1) \ge
\bar{d}(\bar{y},\bar{x})$.
Similarly, $\bar{d}(\bar{y}, \bar{x}) \ge \bar{d}(\bar{x}, \bar{y})$.
Hence, $\bar{d}(\bar{x},\bar{y}) = d(x_1,y_1)$.\\
(iii) Triangular inequality:
There exist $x_1 \in \bar{x}$ and $y_1 \in \bar{y}$ such that
$\bar{d}(\bar{x},\bar{y})=d(x_1,y_1)$. Also, There exist $y_2 \in \bar{y}$ and
$z_2 \in \bar{z}$ such that
$\bar{d}(\bar{y},\bar{z})=d(y_2,z_2)$.
Since $y_1$ and $y_2$ belong to the same equivalence class,
there exists $g \in G$ such that $y_1 = g(y_2)$.
Then,
\begin{eqnarray*}
&   & \bar{d}(\bar{x}, \bar{y}) + \bar{d}(\bar{y},\bar{z})\\
& = & d(x_1, y_1) + d(y_2,z_2))\\
& = & d(x_1, g(y_2)) + d(y_2, z_2)\\
& = & d(x_1, g(y_2)) + d(g(y_2),g(z_2))~~ (\because g \in Iso(X).)\\
& \ge & d(x_1, g(z_2))~~~(\because d \textrm{ is a metric in }X.)\\
& \ge & \bar{d}(\bar{x}, \bar{z}). ~~(\because z_2 \sim_{\mathcal{G}} g(z_2).)
\end{eqnarray*}
\end{proof}

\noindent The following proposition gives a simpler definition of quotient metric
$\bar{d}$.

\begin{Prop}
If we let $\tilde{d}(\bar{x},\bar{y}) := \min\{d(x,y') : y' \in \bar{y} \}$,
$\tilde{d}(\bar{x},\bar{y}) = \bar{d}(\bar{x},\bar{y})$.
\end{Prop}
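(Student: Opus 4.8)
The plan is to establish the two inequalities $\tilde{d}(\bar{x},\bar{y}) \ge \bar{d}(\bar{x},\bar{y})$ and $\bar{d}(\bar{x},\bar{y}) \ge \tilde{d}(\bar{x},\bar{y})$ separately and then conclude equality. First I would dispose of the easy direction. The pairs over which $\tilde{d}$ minimizes, namely $\{(x,y') : y' \in \bar{y}\}$, form a subset of the pairs $\{(x',y') : x' \in \bar{x},\, y' \in \bar{y}\}$ over which $\bar{d}$ minimizes, because the fixed representative $x$ belongs to $\bar{x}$. A minimum taken over a smaller set can only be larger, so $\tilde{d}(\bar{x},\bar{y}) \ge \bar{d}(\bar{x},\bar{y})$ is immediate and requires no use of the group structure.

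The substance of the argument is the reverse inequality, where the isometry property does the work. I would take a pair $x_1 \in \bar{x}$, $y_1 \in \bar{y}$ attaining $\bar{d}(\bar{x},\bar{y}) = d(x_1,y_1)$. Using the description $\bar{x} = \{g(x) : g \in \mathcal{G}\}$ noted just before the definition of the quotient metric, I write $x_1 = g(x)$ for some $g \in \mathcal{G}$. Since $\mathcal{G}$ is a subgroup of $Iso(X)$ it is closed under inversion, so $g^{-1} \in \mathcal{G} \subseteq Iso(X)$, and applying this isometry gives
$$d(x_1,y_1) = d(g(x), y_1) = d\bigl(g^{-1}(g(x)), g^{-1}(y_1)\bigr) = d\bigl(x, g^{-1}(y_1)\bigr).$$
The crucial point is that $g^{-1}(y_1)$ remains in $\bar{y}$, because the class is closed under the action of $\mathcal{G}$; hence $d(x, g^{-1}(y_1))$ is one of the values competing in the definition of $\tilde{d}$, so $d(x_1,y_1) \ge \tilde{d}(\bar{x},\bar{y})$. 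This yields $\bar{d}(\bar{x},\bar{y}) \ge \tilde{d}(\bar{x},\bar{y})$.

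Combining the two bounds gives the asserted equality. The main obstacle is really just the single idea in the middle step: the isometry lets us transfer the group element $g$ off the first coordinate and onto the second as $g^{-1}$, while closure of the equivalence class keeps the transformed point inside $\bar{y}$; everything else is a routine comparison of minima. As a free by-product, the identity shows that $\tilde{d}(\bar{x},\bar{y})$ does not depend on the choice of representative $x \in \bar{x}$, since it always equals the representative-free quantity $\bar{d}(\bar{x},\bar{y})$.
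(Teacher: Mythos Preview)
Your proof is correct and follows essentially the same approach as the paper: the trivial inequality comes from minimizing over a subset, and for the reverse inequality you pick a minimizing pair $(x_1,y_1)$, use a group element relating $x_1$ to the fixed representative $x$, and invoke the isometry property to move that element onto the second coordinate while staying inside $\bar{y}$. The only cosmetic difference is that the paper writes $x = g(x_1)$ and applies $g$, whereas you write $x_1 = g(x)$ and apply $g^{-1}$, which amounts to the same thing.
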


\begin{proof}
Let $\bar{x},\bar{y} \in X/G$.
It is clear that $\tilde{d}(\bar{x},\bar{y}) \ge \bar{d}(\bar{x},\bar{y})$
by the definition.
Now we will show that
$\tilde{d}(\bar{x},\bar{y}) \le \bar{d}(\bar{x},\bar{y})$.
Suppose that
$\bar{d}(\bar{x},\bar{y})=d(x_1,y_1)$.
Since $x$ and $x_1$ belong to the same equivalence class,
there exists $g \in G$ such that $x = g(x_1)$.
Since $g$ is an isometry, 
$d(x_1,y_1) = d(g(x_1),g(y_1)) = d(x,g(y_1))$.
$g(y_1) \sim_G y_1 \sim_G y$. So $\tilde{d}(\bar{x},\bar{y}) \le
d(x_1,y_1) = \bar{d}(\bar{x},\bar{y})$.
\end{proof}

This metric space $(X/\mathcal{G}, \bar{d})$ is called {\em quotient metric space}.
Quotient space conceptually corresponds to the phenotype space.
The line segment and the geometric crossover in the quotient metric space
are defined in the same way as in other metric spaces.
However, since in general cases solutions are represented only 
in the genotype space,
we need to define line segments and crossovers on $(X, d)$,
not on $(X/\mathcal{G}, \bar{d})$, to practically apply the concept.

In a metric space $(X, d)$, a \emph{quotient line segment}
is the set of the form 
$[x;y]_{\bar{d}} = \{z \in X ~|~ \bar{d}(\bar{x},\bar{z})+
\bar{d}(\bar{z},\bar{y}) = \bar{d}(\bar{x},\bar{y}), \bar{z} \in X/G\}$,
where $\bar{x},\bar{y} \in X/G$.

\begin{Prop}
\label{prop:segment}
If $\bar{d}(\bar{x},\bar{y}) = d(x,y^*)$,
$[x;y^*]_d \subseteq [x;y]_{\bar{d}}$.
\end{Prop}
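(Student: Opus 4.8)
The plan is to take an arbitrary point $z \in [x;y^*]_d$ and verify directly that it satisfies the defining equation of the quotient segment $[x;y]_{\bar{d}}$, namely $\bar{d}(\bar{x},\bar{z})+\bar{d}(\bar{z},\bar{y})=\bar{d}(\bar{x},\bar{y})$. The argument is a two-sided sandwich: one inequality comes from the fact that the quotient distance can only shrink the original distance between fixed representatives, and the reverse inequality is exactly the triangle inequality for $\bar{d}$ already established in Proposition~\ref{prop:metric}.

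First I would record the elementary monotonicity fact that, for any representatives $a\in\bar{a}$ and $b\in\bar{b}$, the definition of $\bar{d}$ as a minimum over all representatives gives $\bar{d}(\bar{a},\bar{b})\le d(a,b)$. Applying this with the honest representatives $x\in\bar{x}$, $z\in\bar{z}$, and $y^*\in\bar{y}$ yields the two bounds $\bar{d}(\bar{x},\bar{z})\le d(x,z)$ and $\bar{d}(\bar{z},\bar{y})\le d(z,y^*)$.

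Now, since $z$ lies on the ordinary metric segment, $d(x,z)+d(z,y^*)=d(x,y^*)$, and by hypothesis $d(x,y^*)=\bar{d}(\bar{x},\bar{y})$. Chaining these with the two monotonicity bounds gives
$$\bar{d}(\bar{x},\bar{z})+\bar{d}(\bar{z},\bar{y})\le d(x,z)+d(z,y^*)=d(x,y^*)=\bar{d}(\bar{x},\bar{y}).$$
The reverse inequality $\bar{d}(\bar{x},\bar{z})+\bar{d}(\bar{z},\bar{y})\ge\bar{d}(\bar{x},\bar{y})$ is precisely the triangle inequality for the quotient metric. Combining the two forces equality, so $z\in[x;y]_{\bar{d}}$, which is the desired inclusion.

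I do not expect a genuine obstacle here: the role of the hypothesis $\bar{d}(\bar{x},\bar{y})=d(x,y^*)$ is exactly to anchor the final chain of equalities so that the upper bound collapses onto $\bar{d}(\bar{x},\bar{y})$, and the only non-trivial ingredient — the triangle inequality for $\bar{d}$ — is already in hand. The one point I would state carefully is that the monotonicity bounds use $x$, $z$, and $y^*$ as actual representatives of their classes, so that the upper bound is obtained \emph{without} having to re-optimize over representatives; the hypothesis supplies the one optimal choice ($y^*$) that is needed.
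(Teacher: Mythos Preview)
Your argument is correct and is essentially identical to the paper's own proof: take $z\in[x;y^*]_d$, bound $\bar{d}(\bar{x},\bar{z})+\bar{d}(\bar{z},\bar{y})\le d(x,z)+d(z,y^*)=d(x,y^*)=\bar{d}(\bar{x},\bar{y})$ using the definition of $\bar d$ as a minimum, then invoke the triangle inequality for $\bar d$ to force equality. The only difference is that you spell out the ``monotonicity'' step $\bar{d}(\bar{a},\bar{b})\le d(a,b)$ explicitly, whereas the paper uses it silently in the first displayed inequality.
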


\begin{proof}
Let $z \in [x;y^*]_d$.
Then,
$\bar{d}(\bar{x},\bar{z}) + \bar{d}(\bar{z},\bar{y})
\le d(x,z) + d(z,y^*)
= d(x,y^*) = \bar{d}(\bar{x},\bar{y})$.
Since $\bar{d}(\bar{x},\bar{z}) + \bar{d}(\bar{z},\bar{y}) \ge \bar{d}(\bar{x},\bar{y})$
by the property of triangular property, 
$\bar{d}(\bar{x},\bar{z}) + \bar{d}(\bar{z},\bar{y}) = \bar{d}(\bar{x},\bar{y})$.
So, $z \in [x;y]_{\bar{d}}$.
\end{proof}

Now we can define the {\em quotient geometric crossover}.

\begin{Def}[Quotient geometric crossover]
A binary operator $GX_q$ is a {\em quotient geometric crossover}
under the metric $d$ and the equivalence relation $\sim_G$ if
all offspring are in the quotient line segment between its parents $x$ and $y$,
i.e., $GX_q(x,y) \subseteq [x;y]_{\bar{d}}$.
\end{Def}

In the following we define the {\em induced quotient crossover}, which
is a kind of quotient geometric crossovers.
This crossover is defined using the original geometric crossover under the
original distance $d$.
This crossover is not only concrete but also easily implemented
while the quotient geometric crossover is conceptual.

\begin{Def}[Induced quotient crossover]
\label{def:iqx}
First, find $y^*$ in the equivalence class $\bar{y}$ of the second parent $y$
such that $\bar{d}(\bar{x},\bar{y}) = d(x,y^*)$.
Then, do the geometric crossover on $X$ using the first parent $x$ and the normalized
second parent $y^*$.
\end{Def}

\begin{Cor}
Induced quotient crossover is a quotient geometric crossover.
\end{Cor}

\begin{proof}
Let $y^* \in \bar{y}$ be a normalized second parent i.e.,
$\bar{d}(\bar{x},\bar{y}) = d(x,y^*)$.
Then, $[x;y^*]_d \subseteq [x;y]_{\bar{d}}$ by Proposition~\ref{prop:segment}.
This satisfies the definition of quotient geometric crossover.
\end{proof}

\noindent Induced quotient crossover can be a bridge between
the original geometric crossover and the quotient geometric crossover.
We can redraw Figure~\ref{fig:diagram} including the
induced quotient crossover.
It is shown in Figure~\ref{fig:newdiagram}.

\begin{figure}
\psfrag{g}{\large $\mathfrak{g}$}
\psfrag{geometric}{geometric}
\psfrag{quotient}{quotient}
\psfrag{crossover}{crossover}
\psfrag{induced quotient}{induced quotient}
\centering
\centerline{\psfig{figure=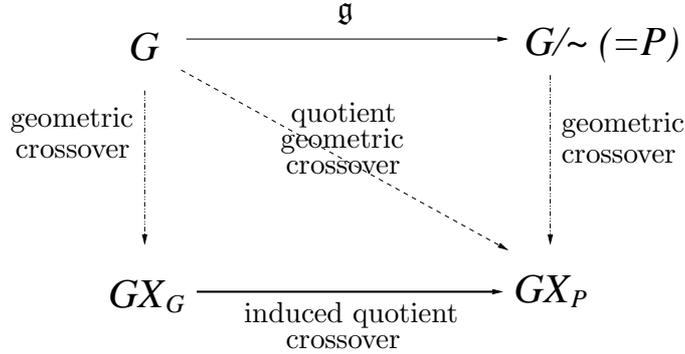,width=3.5in,angle=0}}
\caption[]{Diagram linking genotype, phenotype spaces, and crossovers including
induced quotient crossover}
\label{fig:newdiagram}
\end{figure}

In the next section, we consider a number of equivalence
classes for the quotient operation and its related induced genotypic
crossover transformation.

\section{Examples}
\label{sec:applications} 

In this section, we introduce examples of quotient geometric crossovers.
In some examples (groupings in Subsection~\ref{sec:grouping} and sequences in Subsection \ref{sec:sequences}), 
previously proposed crossovers are reinterpreted as quotient geometric
crossovers. In other examples, we introduce new crossovers 
consistent with genotype-phenotype mapping
of given problems using the concept
of quotient geometric crossover.

\subsection{Groupings}
\label{sec:grouping}

{\em Grouping} problems \cite{Falk98} are commonly
concerned with partitioning given item set into mutually disjoint
subsets. Examples belonging to this class of problems are multiway
graph partitioning, graph coloring, bin packing, and so on. Grouping
representation is also used to solve the joint replenishment
problem, which is a well-known problem appeared in the field of
industrial engineering \cite{Olsen05}. In this class of problems,
the normalization decreased the problem difficulty and led to
notable improvement in performance.

Most normalization studies for grouping problems were focused on the $k$-way
partitioning problem. In the problem, the $k$-ary representation, in which $k$
subsets are represented by the integers from $0$ to $k - 1$, has been generally
used. In this case, one phenotype (a $k$-way partition) is represented by $k!$
different genotypes. In the problem, a normalization method was used in
\cite{KM00-2}.
Other studies for the $k$-way partitioning problem used the same technique
\cite{CM03, JPKim01}.
In the sense that normalization pursues the minimization of genotype inconsistency
among chromosomes, in \cite{KimM05-p}, Kim and Moon proposed
an optimal, efficient normalization method for grouping problems and
a distance measure, the {\em labeling-independent distance}, that eliminates
such dependency completely.

Now we reinterpret the previous work in terms of quotient space.
Let $a, b \in X = \{1, 2, \ldots, k\}^n$
be $k$-ary encodings (fixed-length vectors on a $k$-ary alphabet)
and $\Sigma_k$ be a set of all permutations of length $k$.
For each $\sigma \in \Sigma_k$, we can view
$\sigma$ as a function on $X$ by defining $\sigma(a)$ be
a permuted encoding of $a$ by a
permutation $\sigma$.
For example, in the case that $a = (1,2,3,3,2,4,1,4)$ is 
a $4$-ary encoding and $\sigma = {1~2~3~4 \choose 2~4~3~1}
\in \Sigma_4$, $\sigma(a) = (2,4,3,3,4,1,2,1)$.

It is well known that permutations form a group. Hence,
$\Sigma_k$ is a group. Moreover, when we use the Hamming distance
$H$ on $X$, it is easy to check that each $\sigma \in \Sigma_k$ is
an isometry.
So $\sim_{\Sigma_k}$ becomes an equivalence relation and
the quotient metric in Definition~\ref{def:quotient_metric} is well defined
by Proposition~\ref{prop:metric}.
The quotient metric was introduced as labeling-independent distance in \cite{KimM05-p}.
In the context of this study, it is rewritten as follows:
\begin{displaymath}
\bar{d}(\bar{a},\bar{b}) := \min_{\sigma \in \Sigma_k} H(a,\sigma(b)).
\end{displaymath}

\noindent An example case is shown in Figure~\ref{fig:ex_grouping}.

\begin{figure}
\centering
%\centerline{
$$X=\{1,2,3\}^4$$
$$x = (1,2,3,1), y = (2,1,2,3) \in X$$
\begin{tabular}{|c|c|c|}
\hline
$\Sigma_k$ & $\bar{y}$ & $H(x,\sigma(y))$ \\
\hline
$\sigma_1 = (1,2,3)$ & $\sigma_1(y) = (2,1,2,3)$ & $H(x,\sigma_1(y)) = 4$ \\
$\sigma_2 = (1,3,2)$ & $\sigma_2(y) = (3,1,3,2)$ & $H(x,\sigma_2(y)) = 3$ \\
$\sigma_3 = (2,1,3)$ & $\sigma_3(y) = (1,2,1,3)$ & $H(x,\sigma_3(y)) = 2$ \\
$\sigma_4 = (2,3,1)$ & $\sigma_4(y) = (3,2,3,1)$ & $H(x,\sigma_4(y)) = 1$ \\
$\sigma_5 = (3,1,2)$ & $\sigma_5(y) = (1,3,1,2)$ & $H(x,\sigma_5(y)) = 3$ \\
$\sigma_6 = (3,2,1)$ & $\sigma_6(y) = (2,3,2,1)$ & $H(x,\sigma_6(y)) = 3$ \\
\hline
\end{tabular}
$$y^*=\sigma_4(y) = (3,2,3,1)$$
$$\bar{d}(\bar{x},\bar{y}) = 1$$
%}
\caption[]{An example of grouping}
\label{fig:ex_grouping}
\end{figure}

The definition of labeling-independent crossover presented in \cite{KYMM2006}
is in the following.

\begin{Def}[Labeling-independent crossover]
Normalize the second parent to the first under the Hamming distance.
Then, do the normal crossover using the first parent and the normalized
second parent.
\end{Def}

\noindent This crossover is exactly the induced quotient crossover.
For the process of normalization, it is possible to
enumerate all $k!$ permutations and find an optimal
one among them. However, for a large $k$, such a procedure
is intractable.
Fortunately, it can be done in $O(k^3)$ time using 
the {\em Hungarian method} proposed by Kuhn \cite{Kuhn55}.

In summary, we have the following. \\

\begin{tabular}{|l|l|}
\hline
Example & Groupings \\ \hline \hline
Genotype space $X$ & $\{1,2,\ldots,k\}^n$ \\ \hline
Isometry group $\mathcal{G}$ & $\Sigma_k$: a set of all permutations\\
inducing phenotype space $X/\mathcal{G}$ &  of length $k$ \\ \hline
Metric $d$ on $X$ & Hamming distance $H$ \\ \hline
Original geometric crossover & traditional crossover for vectors\\ \hline
Induced quotient crossover & Labeling-independent crossover in \cite{KYMM2006} \\
\hline
\end{tabular}
\\
%We proved that this crossover is geometric
%in \cite{KYMM2006}
%in veiw of phenotype space.
%In fact, 
%in veiw of genotype space $U$,
%this crossover is a quotient geometric crossover 
%since its offspring are exactly on quotient line segment.
%though we did not
%represented with the notion of quotient geometric crossover.

From understanding normalization for grouping problems in
terms of quotient geometric crossover, we can
understand the benefit of normalization in terms of landscape
analysis. 
We have already done this in our previous work \cite{Mor07}.

\subsection{Graphs}
\label{sec:graphs}

In this subsection, we consider any problem naturally defined over a graph in
which the
fitness of the solution does not depend on the labels on the nodes but
only on the structural relationship, i.e., edge between nodes.

Formally, let $A \in \mathfrak{M}_n$ be the adjacency matrix of a labeled graph
using labels of $n$ nodes and let $\mathcal{P}_n$ be a set of all $n \times n$
permutation matrices\footnote{
Permutation matrix is a $(0,1)$-matrix with exactly one $1$ in every row and
column.}.
Then, for each permutation matrix $P \in \mathcal{P}_n$, 
the matrix $PAP^T$ means the labeled graph obtained by relabeling $A$
according to the permutation represented by $P$.
The fitness $f\!: \mathfrak{M}_n \rightarrow \mathbb{R}$
satisfies that for every $A \in \mathfrak{M}_n$ and every permutation matrix
$P$,
$f(A)=f(PAP^T)$.

Let $(\mathfrak{M}_n,H)$ be a metric space on the labeled graphs under the
Hamming distance $H$.
Notice that this metric is labeling-dependent.
In particular, $H(A,PAP^T)$ may not be zero
although $A$ and $PAP^T$ represent the same structure.
If $A$ is equal to $PBP^T$ for some permutation matrix $P$,
we define $A$ and $B$ to be in relation $\sim_{\mathcal{P}_n}$, 
i.e., $A \sim_{\mathcal{P}_n} B$.
Since a set of permutation matrices $\mathcal{P}_n$ forms a group,
the relation $\sim_{\mathcal{P}_n}$ is an equivalence relation
by Proposition~\ref{prop:equivalence}.

The equivalence class $\bar{A}$ is represented as follows:
$$\bar{A} := \{PA : P \in \mathcal{P}_n\}.$$
It corresponds to an {\em unlabeled graph} and
the quotient space $\mathfrak{M}_n/\mathcal{P}_n$ can be understood as
{\em unlabeled-graph space}.
$\mathfrak{M}_n/\mathcal{P}_n$ is a quotient metric space
by Proposition~\ref{prop:metric}.
So we obtain induced quotient metric on $\mathfrak{M_n}/\mathcal{P}_n$.
It can be written as follows:
$$\bar{d}(\bar{A},\bar{B}) =\min_{P \in \mathcal{P}_n} H(A,PB).$$

\begin{figure}
\centering
%\centerline{
$$X=\mathfrak{M}_3$$
$$ A = \left(\begin{array}{ccc}
0 & 1 & 0 \\
1 & 0 & 1 \\
0 & 1 & 0
\end{array}\right), B = \left(\begin{array}{ccc}
0 & 0 & 1 \\
0 & 0 & 1 \\
1 & 1 & 0
\end{array}\right) \in X$$
\centerline{\epsfig{figure=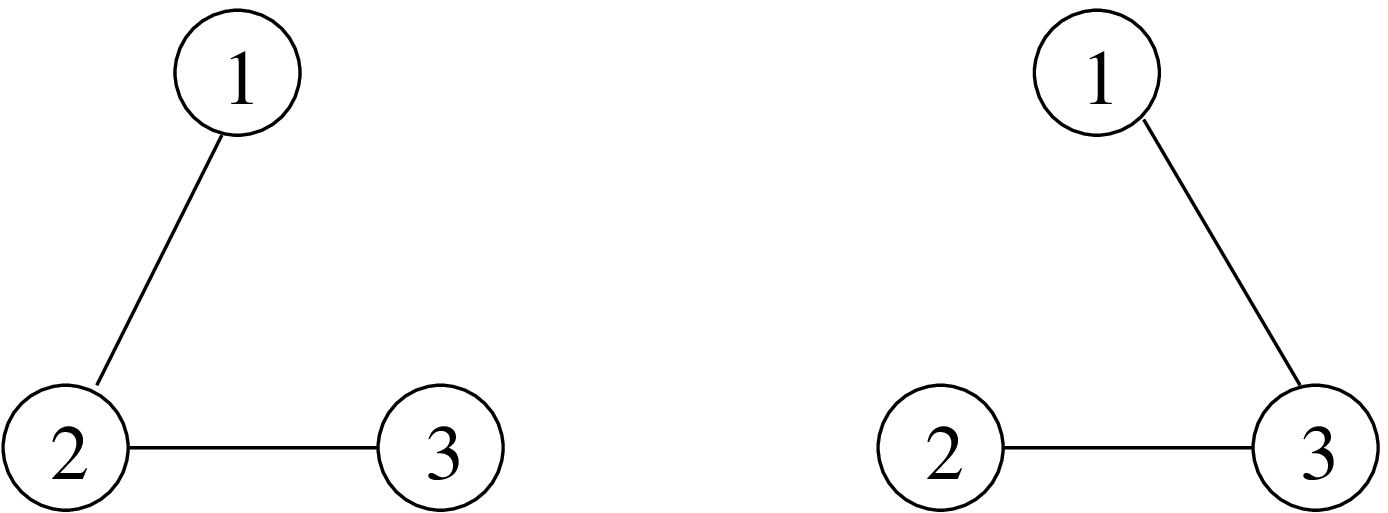,width=2.5in,angle=0}}
\centerline{~}
\begin{tabular}{|c|c|c|}
\hline
$\mathcal{P}_3$ & $\bar{B}$ & $H(A,PBP^T)$ \\
\hline
$P_1 = \left(\begin{array}{ccc}
1 & 0 & 0 \\
0 & 1 & 0 \\
0 & 0 & 1
\end{array}\right)$
& $P_1 B P_1^T = \left(\begin{array}{ccc}
0 & 0 & 1 \\
0 & 0 & 1 \\
1 & 1 & 0
\end{array}\right)$ & $H(A,P_1 B P_1^T) = 4$ \\ \hline
$P_2 = \left(\begin{array}{ccc}
1 & 0 & 0 \\
0 & 0 & 1 \\
0 & 1 & 0
\end{array}\right)$
& $P_2 B P_2^T = \left(\begin{array}{ccc}
0 & 1 & 0 \\
1 & 0 & 1 \\
0 & 1 & 0
\end{array}\right)$ & $H(A,P_2 B P_2^T) = 0$ \\ \hline
$P_3 = \left(\begin{array}{ccc}
0 & 1 & 0 \\
1 & 0 & 0 \\
0 & 0 & 1
\end{array}\right)$
& $P_3 B P_3^T = \left(\begin{array}{ccc}
0 & 0 & 1 \\
0 & 0 & 1 \\
1 & 1 & 0
\end{array}\right)$ & $H(A,P_3 B P_3^T) = 4$ \\ \hline
$P_4 = \left(\begin{array}{ccc}
0 & 1 & 0 \\
0 & 0 & 1 \\
1 & 0 & 0
\end{array}\right)$
& $P_4 B P_4^T = \left(\begin{array}{ccc}
0 & 1 & 0 \\
1 & 0 & 1 \\
0 & 1 & 0
\end{array}\right)$ & $H(A,P_4 B P_4^T) = 0$ \\ \hline
$P_5 = \left(\begin{array}{ccc}
0 & 0 & 1 \\
1 & 0 & 0 \\
0 & 1 & 0
\end{array}\right)$
& $P_5 B P_5^T = \left(\begin{array}{ccc}
0 & 1 & 1 \\
1 & 0 & 0 \\
1 & 0 & 0
\end{array}\right)$ & $H(A,P_5 B P_5^T) = 4$ \\ \hline
$P_6 = \left(\begin{array}{ccc}
0 & 0 & 1 \\
0 & 1 & 0 \\
1 & 0 & 0
\end{array}\right)$
& $P_6 B P_6^T = \left(\begin{array}{ccc}
0 & 1 & 1 \\
1 & 0 & 0 \\
1 & 0 & 0
\end{array}\right)$ & $H(A,P_6 B P_6^T) = 4$ \\ \hline
\end{tabular} \\
\vspace{0.2in}
$B^*= P_2 B P_2^T$ or $P_4 B P_4^T$
$$\bar{d}(\bar{A},\bar{B}) = 0$$
%}
\caption[]{An example of graph}
\label{fig:ex_graph}
\end{figure}

\noindent An example for graphs is shown in Figure \ref{fig:ex_graph}.

Now we design induced quotient crossover.
In this example, the process of finding the normalized second parent $y^*$
can be understood as graph matching in terms of graphs which are not adjacency matrices.

\begin{Def}[Induced quotient crossover for graphs]
Do the graph matching of the second parent $B$ to the first $A$ under the
Hamming distance $H$,
i.e., $$B^* := \argmin_{B' \in \bar{B}} H(A, B').$$
Then, do the normal crossover using the first parent $A$ and the graph-matched
second parent $B^*$.
\end{Def}

\noindent The induced quotient crossover
is defined over unlabeled graphs
$\mathfrak{M}_n/\mathcal{P}_n$.
This space is much
smaller than labeled graphs $\mathfrak{M}_n$. More precisely,
$|\mathfrak{M}_n/\mathcal{P}_n|=|\mathfrak{M}_n|/n!$. This means that the more the
labels are, the
smaller the unlabeled-graph space is when compared with the labeled-graph space.
Smaller space means better performance, given the same amount of
evaluations.

Now we tell how to guide the implementation
using graph matching for specific geometric crossovers.
To implement the geometric crossover over unlabeled graphs, we need to use
labeled graphs.
The labeling results are necessary to represent and handle the solution,
even if in fact it is only an auxiliary function and can be considered
as not being part of the problem to solve. Graph matching before crossover
allows to implement the geometric crossover on the unlabeled-graph
space. We use the corresponding geometric crossover over the auxiliary
space of the labeled graph after graph matching. \\

\begin{tabular}{|l|l|}
\hline
Example & Graphs \\ \hline \hline
Genotype space $X$ & $\mathfrak{M}_n$ (the set of all $n \times n$ adjacency matrices) \\ \hline
Isometry group $\mathcal{G}$ & $\mathcal{P}_n$ : a set of all $n \times n$ permutation\\
inducing phenotype space $X/\mathcal{G}$ & matrices \\ \hline
Metric $d$ on $X$ & Hamming distance $H$ \\ \hline
Original geometric crossover & traditional crossover on adjacency matrices \\
& seen as length-$n^2$ vectors\\ \hline
Induced quotient crossover & graph matching before traditional crossover \\
& (newly introduced in this study) \\
\hline
\end{tabular}
\\

By applying the quotient geometric crossover on graphs,
we can design a crossover better tailored to graphs. The notion
of graph matching before crossover arises directly from the
definition of quotient geometric crossover. Graphs are very important because
they are ubiquitous. In future work we will test this crossover on some
applications. Graphs and groupings can be seen as particular cases of
labeled structures in which the fitness of a solution depends only
on the structure and not on the specific labeling. In future work we
will also study the class of labeled structures in combination with
quotient geometric crossover.

\subsection{Symmetric Functions}
\label{sec:symmetric}
A symmetric function on $n$ variables $x_1 , x_2, \ldots, x_n$ is a function that is
unchanged by any permutation of its variables. That is , if $f(x_1,x_2,\ldots, x_n)
= f(x_\sigma(1), x_\sigma(2), \ldots, x_\sigma(n))$ for any permutation $\sigma$,
the function $f$ is called symmetric function.
In this subsection, we consider problems of which fitness function is
symmetric. 
Some evolutionary studies have been made on such problems \cite{Seo08, Yoon08}.
More properties about specific symmetric functions 
are introduced in \cite{Choi07,EnumerativeCombinatorics}.

Solutions for symmetric functions are typically represented as
$n$-dimensional vectors, i.e., length-$n$ strings.
Let $X$ be the solution space (or domain) of given symmetric function
and $\Sigma_n$ be a set of all permutations of length $n$.
Similarly to the example of grouping in Section \ref{sec:grouping}, $\sigma \in \Sigma_n$ can be
understood as a function.
For example, in the case that $x = (x_1,x_2,x_3,x_4)$ 
and $\sigma = {1~2~3~4 \choose 2~4~3~1}
\in \Sigma_4$, $g_\sigma(x) = (x_2, x_4, x_3, x_1)$.
As mentioned in Section \ref{sec:grouping}, $\Sigma_n$ is a group and
each $\sigma$ is an isometry.

If $X$ is a real space, we can use the Euclidean distance $E$.
In that case, induced quotient metric on $X/\mathcal{G}$
is defined as follows:
\begin{displaymath}
\bar{d}(\bar{x},\bar{y}) := \min_{\sigma \in \Sigma_n}
E(x,\sigma(y))
\end{displaymath}

\noindent Figure~\ref{fig:ex_symmetric} will be helpful to understand
the quotient metric space for this case.

Induced quotient crossover can also be defined as in Definition \ref{def:iqx}.
Because it uses permutation,
it can be performed in $O(n^3)$ time by the Hungarian method similarly to
groupings.

\begin{figure}
\centering
%\centerline{
$$X = \mathbb{R}^3$$
$$x = (1,4,5), y = (3,0,6) \in X$$
\begin{tabular}{|c|c|c|}
\hline
$\Sigma_3$ & $\bar{y}$ & $H(x,\sigma(y))$ \\
\hline
$\sigma_1 = (1,2,3)$ & $\sigma_1(y) = (3,0,6)$ & $E(x,\sigma_1(y)) = \sqrt{21}$
\\
$\sigma_2 = (1,3,2)$ & $\sigma_2(y) = (3,6,0)$ & $E(x,\sigma_2(y)) = \sqrt{33}$
\\
$\sigma_3 = (2,1,3)$ & $\sigma_3(y) = (0,3,6)$ & $E(x,\sigma_3(y)) = \sqrt{3}$
\\
$\sigma_4 = (2,3,1)$ & $\sigma_4(y) = (0,6,3)$ & $E(x,\sigma_4(y)) = 3$
\\
$\sigma_5 = (3,1,2)$ & $\sigma_5(y) = (6,3,0)$ & $E(x,\sigma_5(y)) = \sqrt{51}$
\\
$\sigma_6 = (3,2,1)$ & $\sigma_6(y) = (6,0,3)$ & $E(x,\sigma_6(y)) = 3\sqrt{5}$
\\
\hline
\end{tabular}
$$y^*=\sigma_3(y) = (0,3,6)$$
$$\bar{d}(\bar{x}, \bar{y}) = \sqrt{3}$$
%}
\caption[]{An example of symmetric function under the Euclidean distance}
\label{fig:ex_symmetric}
\end{figure}

Summary for this Euclidean case is as follows:\\

\begin{tabular}{|l|l|}
\hline
Example & Symmetric functions on real space\\ \hline \hline
Genotype space $X$ & $\mathbb{R}^n$ \\ \hline
Isometry group $\mathcal{G}$ & $\Sigma_n$: a set of all permutations \\
inducing phenotype space $X/\mathcal{G}$ & of length $n$\\ \hline
Metric $d$ on $X$ & Euclidean distance $E$ \\ \hline
Original geometric crossover & traditional crossover on real vectors \\ \hline
Induced quotient crossover & rearranging before traditional crossover \\
& (newly proposed in this study) \\
\hline
\end{tabular}
\\

On the other hand, if $X$ is a discrete space as in binary or $k$-ary encoding, 
we can use the Hamming distance.
Then, induced quotient metric on $X/\mathcal{G}$
is defined as follows:
\begin{displaymath}
\bar{d}(\bar{x},\bar{y}) := \min_{\sigma \in \Sigma_n}
H(x,\sigma(y))
\end{displaymath}

\noindent Induced quotient crossover for this case can also
be performed in $O(n^3)$ time by the Hungarian method.
In sum, we have: \\

\begin{tabular}{|l|l|}
\hline
Example & Symmetric functions on discrete space \\ \hline \hline
Genotype space $X$ & $\{0,1\}^n$ \\ \hline
Isometry group $\mathcal{G}$ & $\Sigma_n$: a set of all permutations\\
inducing phenotype space $X/\mathcal{G}$ & of length $n$ \\ \hline
Metric $d$ on $X$ & Hamming distance $H$ \\ \hline
Original geometric crossover & traditional crossover on binary or $k$-ary vectors \\ \hline
Induced quotient crossover & rearranging before traditional crossover \\
& (newly proposed in this study) \\
\hline
\end{tabular}
\\

\subsection{Circular Permutations}
\label{sec:permutation}

Here we consider the case that solutions of a problem
are represented as circular permutations such as
traveling salesman problem (TSP).
Gluing head and tail of the permutation obtains a circular
permutation. Circular permutations cannot be represented directly.
They are typically represented with simple
permutations.
Then each circular permutation is
represented by more than one permutation.
For example, permutations $(1,2,3)$, $(2,3,1)$, and $(3,1,2)$
represent the same phenotype, i.e., circular permutation.
In such problem, the genotype space is 
a set of permutations
and the phenotype space is a set of
circular permutations.
We can consider this problem in view of
genotype-phenotype mapping using the concept of
quotient space.

Let $\Sigma_n$ be a set of all permutations with length $n$.
A function $s_k: \Sigma_n \rightarrow \Sigma_n$ is defined by
$k$-step circular shift operation to right.
For example, $s_2(1,2,3) = (2,3,1)$.
A set of all shift operations $S_n = \{s_k: k=0, 1, 2, \ldots, n-1\}$
is a group. And it is easy to check that 
each $s_k$ is an isometry on $\Sigma_n$.
If $\Sigma_n$ has a metric, $\Sigma_n/S$ has an induced quotient
metric by Proposition \ref{prop:metric}.

Now we consider various distances for permutation encoding.
The most typical distance is the Hamming distance $H$.
Under the Hamming distance, it is known that cycle crossover 
is geometric \cite{MorP05-2-p}. In this case, quotient metric is defined as follows:
\begin{displaymath}
\bar{d}(\bar{x},\bar{y}) := \min_{s \in S_n}
H(x,s(y)).
\end{displaymath}

\noindent An example case is shown in
Figure~\ref{fig:ex_permutation}.

Then we can define induced quotient crossover.

\begin{Def}[Position-independent cycle crossover]
Normalize the second parent to the first under the Hamming distance $H$.
Then, do the cycle crossover using the first parent and the normalized
second parent.
\end{Def}

\noindent Normalizing the second parent takes $O(n)$ time because
the equivalence class of the second parent has exactly $n$ elements
by shift operations. 

\begin{figure}
\centering
%\centerline{
$$X = \Sigma_6$$
$$x = (2,4,5,1,6,3), y = (4,6,1,5,3,2) \in X$$
\begin{tabular}{|c|c|c|}
\hline
$S_6$ & $\bar{y}$ & $H(x,s(y))$ \\
\hline
$s_0$ & $s_0(y) = (4,6,1,5,3,2)$ & $H(x,s_0(y)) = 6$
\\
$s_1$ & $s_1(y) = (2,4,6,1,5,3)$ & $H(x,s_1(y)) = 2$
\\
$s_2$ & $s_2(y) = (3,2,4,6,1,5)$ & $H(x,s_2(y)) = 6$
\\
$s_3$ & $s_3(y) = (5,3,2,4,6,1)$ & $H(x,s_3(y)) = 5$
\\
$s_4$ & $s_4(y) = (1,5,3,2,4,6)$ & $H(x,s_4(y)) = 6$
\\
$s_5$ & $s_5(y) = (6,1,5,3,2,4)$ & $H(x,s_5(y)) = 5$
\\
\hline
\end{tabular}
$$y^*=s_1(y) = (2,4,6,1,5,3)$$
$$\bar{d}(\bar{x}, \bar{y}) = 2$$
%}
\caption[]{An Example of circular permutation under the Hamming distance}
\label{fig:ex_permutation}
\end{figure}

Cycle crossover is also geometric under the swap distance.
The induced quotient crossover can be defined in a similar way
using the swap distance instead of the Hamming distance.
Summary for the case of applying the cycle crossover is as follows: \\

\begin{tabular}{|l|l|}
\hline
Example & circular permutations \\ \hline \hline
Genotype space $X$ & $\Sigma_n$ \\ \hline
Isometry group $\mathcal{G}$ & $S_n$: a set of all shift\\
inducing phenotype space $X/\mathcal{G}$ & operations \\ \hline
Metric $d$ on $X$ & Hamming distance $H$ (or swap distance)\\ \hline
Original geometric crossover & cycle crossover \\
\hline
Induced quotient crossover & rearranging before cycle crossover \\
& (newly proposed in this study) \\
\hline
\end{tabular}
\\

On the other hand,
we can use another well-known distance for $\Sigma_n$ - {\em reversal distance}.
Its neighborhood structure is the one
based on the 2-opt move. 
The reversal move selects any two points along the permutation then reverses
the subsequence between these points.
%This move simply swaps two positions in circular permutation. 
This move induces a graphic distance
between circular permutations: the minimum number of reversals to transform one 
circular permutation
into the other. 
The geometric crossover associated with this
distance belongs to the family of sorting crossovers \cite{MorP05-2-p}: it picks
offspring on the minimum sorting trajectory between parent circular
permutations sorted by reversals. 
%Tours of cities or circular
%permutations cannot be represented directly. They are represented
%with simple permutations. Gluing head and tail of the permutation
%obtains a circular permutation. However each circular permutation is
%represented by more than one simple permutation. How does quotient
%geometric crossover fit in here? We can define an equivalence
%relation on the permutations: {\em each permutation
%belongs to the class of its associated circular permutation}.

\begin{Def}[Position-independent sorting-by-reversals crossover]
Normalize the second parent to the first under the graphic distance.
Then, do the crossover based on sorting by reversals for permutation
using the first parent and the normalized
second parent.\\
\end{Def}

\begin{tabular}{|l|l|}
\hline
Example & circular permutations \\ \hline \hline
Genotype space $X$ & $\Sigma_n$ \\ \hline
Isometry group $\mathcal{G}$ & $S_n$: a set of all shift\\
inducing phenotype space $X/\mathcal{G}$ & operations \\ \hline
Metric $d$ on $X$ & reversal distance \\ \hline
Original geometric crossover & sorting-by-reversals crossover \\
& for permutations \cite{MorP05-2-p} \\
\hline
Induced quotient crossover & sorting-by-reversals crossover \\
& for circular permutation \cite{MorP05-2-p} \\
\hline
\end{tabular} \\

There is a problem in implementing
the geometric crossover under the reversal distance.
Sorting linear or circular permutations by reversals is NP-hard \cite{Caprara97, Solo03}. So, the
geometric crossover under the reversal distance cannot be implemented
efficiently.
Nevertheless,
this example of quotient geometric crossover illustrates how to
obtain a geometric crossover for a transformed representation
(circular permutation) starting from a geometric crossover for the
original representation (permutation). So in this case
quotient geometric crossover is used as a tool to build a new
crossover for a derivative representation from a known geometric
crossover for the original representation. From \cite{MorP05-2-p}, we
know that the sorting-by-reversals crossover for permutations
is an excellent crossover for TSP. In future work we want to test
the sorting-by-reversals crossover for circular permutations. Since
they are a direct representation, we expect it to
perform even better.

\subsection{Sequences}
\label{sec:sequences}
An application in this subsection is not exactly fitted
to a quotient framework by the isometry subgroup
like applications introduced earlier.
However, we present this application because it
follows the quotient approach
except that the equivalence relation is not
from an isometry subgroup.

We recast alignment before recombination in
variable-length sequences as a consequence of quotient geometric
crossover. Consider the case that we use
{\em stretched} sequences as genotypes of sequences.
Stretched sequences mean sequences created by
interleaving `-' anywhere and in any number in the
sequences.
We can define a relation $\sim$ on stretched sequences: {\em each
stretched sequence belongs to the class of its unstretched version}.
Then, we can easily check that the relation $\sim$ is an equivalence relation.

In \cite{moraglio2006a-p}, Moraglio et al. have applied
geometric crossover to variable-length sequences.
The distance for
variable-length sequences they used there is the {\em edit distance}
$LD$\footnote{The notation $LD$ comes from
{\em Levenshtein distance} that is another name of edit distance.}:
the minimum number of insertion, deletion, and replacement of single
character to transform one sequence into the other. The geometric
crossover associated with this distance is proposed in
\cite{moraglio2006a-p}. It is called {\em homologous
geometric crossover}: two sequences are aligned optimally before
recombination. Alignment here means allowing parent sequences to be
stretched to match better with each other.
Two parent stretched sequences are aligned by interleaving or
removing `-' to create two stretched sequences of the same length that
have minimal Hamming distance. For example, if we want to recombine
{\tt agcacaca} and {\tt acacacta}, we need to align them optimally
first: {\tt agcacac-a} and {\tt a-cacacta}. Notice that the Hamming
distance between the aligned sequences is less than the Hamming distance
between the non-aligned sequences.
%sequences means interleaving `-' anywhere and in any number in the
%sequences to create two stretched sequences of the same length that
%have minimal Hamming distance. For example, if we want to recombine
%{\tt agcacaca} and {\tt acacacta}, we need to align them optimally
%first: {\tt agcacac-a} and {\tt a-cacacta}. Notice that the Hamming
%distance between the aligned sequences is less than the Hamming distance
%between the non-aligned sequences.
%Notice that the Hamming
%distance between the aligned sequences is less than the Hamming distance
%between the non-aligned sequences.
%Formally stretching
%sequences means interleaving `-' anywhere and in any number in the
%sequences to create two stretched sequences of the same length that
%have minimal Hamming distance. For example, if we want to recombine
%{\tt agcacaca} and {\tt acacacta}, we need to align them optimally
%first: {\tt agcacac-a} and {\tt a-cacacta}. Notice that the Hamming
%distance between the aligned sequences is less than the Hamming distance
%between the non-aligned sequences.
After the optimal alignment, one does the normal crossover
and produce a new {\em stretched} sequence. The
offspring is obtained by removing `-', so by unstretching the sequence.
%How does quotient geometric crossover fit in here?
%We can define a relation $\sim$ on stretched sequences: {\em each
%stretched sequence belongs to the class of its unstretched version}.
%Then, we can easily check that the relation $\sim$ is an equivalence relation.

From \cite{moraglio2006a-p}, we can easily check that
edit distance for sequences is a metric and hence
the phenotype space - the space of variable-length sequences -
is a quotient metric space.
In fact, the edit distance corresponds to a quotient metric and the homologous
geometric crossover corresponds to induced quotient crossover.
Suppose that we deal with only genotypes, i.e., stretched sequences.
We leave offspring produced by homologous crossover just stretched -
not removing `-'. Then the offspring exactly lies on quotient line segment.
So the crossover is a quotient geometric crossover in terms of
stretched sequences. In sum, we have: \\

\begin{tabular}{|l|l|}
\hline
Example & sequences \\ \hline \hline
Genotype space $X$ & stretched sequences \\ \hline
Equivalence relation $\sim$ & stretched sequences \\
inducing phenotype space $X/\!\!\sim$ & with the same unstretched sequence \\ \hline
Metric $d$ on $X$ & edit distance \\ \hline
Original geometric crossover & traditional crossover on stretched sequences \\
\hline
Induced quotient crossover & homologous crossover \cite{moraglio2006a-p} \\
\hline
\end{tabular}
\\

Phenotypes are variable-length sequences that are directly
representable. So in this case the quotient geometric crossover is
not used to search a non-directly representable space (phenotypes)
through an auxiliary directly representable space (genotypes). The
benefit of applying the quotient geometric crossover on
variable-length sequences is that the homologous crossover over
sequences $GX_P$ is naturally understood as a transformation of
the geometric crossover $GX_G$ over stretched sequences $G$ rather
than a crossover acting directly on sequences $P$. This is because
the notion of optimal alignment is inherently defined on stretched
sequences and not on simple sequences. In \cite{moraglio2006a-p},
Moraglio et al. have tested the homologous crossover on the protein motif
discovery problem. In future work we want to study how the optimal
alignment transformation affects the fitness landscape associated
with geometric crossover with and without alignment.
%This idea can be extended to any stretchable structure, e.g., {\em stretchable
%graphs}.

\section{Concluding Remarks}
\label{sec:conclusions}

In this paper we have mathematically analyzed genotype and phenotype spaces by
introducing the notion of quotient space.
Phenotype space can be regarded as quotient space by a genotype-phenotype mapping.
Geometric crossovers has the advantage in that they can also be formally defined once a
distance is defined. Owing to this advantage we can connect a solution space - as
a metric space - and crossovers. Moreover, geometric crossover based on the
appropriate distance of a space reflects properties of given space.
We introduced quotient metric on phenotype space. 
Since the quotient metric
is a part of the
phenotype space structure, the geometric crossover by the metric
reflects the properties of phenotype space more effectively
than the original geometric crossover.

As shown in application examples,
quotient geometric crossover is not only theoretically significant
but also has a practical effect of making search more effective
by reducing the search space or removing the inherent bias.
In the example of grouping, we newly reinterpreted geometric crossover \cite{Mor07}
that was previously proposed by the
authors to be theoretically complete.
In the examples of graphs, symmetric functions, and circular permutations,
we induced new crossovers better tailored to phenotype space
using the proposed methodology.
In the example of sequence, we successfully analyzed previous study \cite{moraglio2006a-p}
in view of our quotient theory though it is slightly escaped from
the framework we presented.

In future work, we will test the proposed induced quotient crossovers
in solving the problems using genetic algorithms.
Also, more examples and applications for each example case
are left for future study.

{%\small
\bibliographystyle{plain}
\bibliography{ref}
}

\end{document}